
\documentclass[letterpaper, 10 pt, conference]{ieeeconf}  

\IEEEoverridecommandlockouts                              

\overrideIEEEmargins                                      


\usepackage{graphics} 
\usepackage{epsfig} 
\usepackage{times} 
\usepackage{amsmath} 
\usepackage{amssymb}  
\usepackage{caption}
\usepackage{subcaption}
\usepackage{tikz,pgfplots}
\usepackage{aircraftshapes}
\usepackage{comment}
\usepackage{tikz,amsmath}
\usepackage{tikz-3dplot}
\usepackage{hyperref}
\definecolor{darkblue}{rgb}{0.0,0.0,0.3}
\hypersetup{colorlinks,breaklinks,
            linkcolor=black,urlcolor=darkblue,
            anchorcolor=black,citecolor=black}
\usetikzlibrary{3d,shapes,arrows,calc,patterns,positioning,decorations.pathmorphing,decorations.markings}

\newcommand{\dfb}{\stackrel{\Delta}{=}}
\newcommand{\R}{\ensuremath{\mathbb R}}

\newtheorem{theorem}{\textbf{Theorem}}[section]
\newtheorem{proposition}{\textbf{Proposition}}[section]
\newtheorem{remark}[theorem]{Remark}

\tikzstyle{block} = [draw, fill=white, rectangle, 
	minimum height=3em, minimum width=6em, text width=5em, text centered]
\tikzstyle{sum} = [draw, fill=white, circle, node distance=2cm]
\tikzstyle{input} = [coordinate]
\tikzstyle{output} = [coordinate]
\tikzstyle{pinstyle} = [pin edge={to-,thin,black}]
\tikzstyle{branch} = [circle,inner sep=0pt,minimum size=1mm,fill=black,draw=black]
\tikzstyle{bus} = [draw, fill=black, rectangle, minimum height=3em, minimum width=0.5em]

\tikzstyle{vertex}=[circle,fill=black!25,minimum size=20pt,inner sep=0pt]
\tikzstyle{selected vertex} = [vertex, fill=red!24]
\tikzstyle{edge} = [draw,thick,-]
\tikzstyle{dedge} = [draw,thick,->]
\tikzstyle{shadowdedge} = [draw, dotted,->]
\tikzstyle{weight} = [font=\small]
\tikzstyle{selected edge} = [draw,line width=5pt,-,red!50]
\tikzstyle{ignored edge} = [draw,line width=5pt,-,black!20]

\title{\LARGE \bf
Flexible collaborative transportation by a team of rotorcraft
}

\author{Hector Garcia de Marina$^{1}$ Ewoud Smeur$^{2}$
\thanks{$^{1}$H. Garcia de Marina is with the Unmanned Aerial Systems Center, Southern University of Denmark, Denmark. {\tt\small hgm@mmmi.sdu.dk}.}%
\thanks{$^{2}$E. Smeur is with the MAVLab, Department of Aerospace Engineering, Delft University of Technology, 2629HS Delft, The Netherlands.}%
}
\begin{document}

\maketitle
\thispagestyle{empty}
\pagestyle{empty}

\begin{abstract}
We propose a combined method for the collaborative transportation of a suspended payload by a team of rotorcraft. A recent distance-based formation-motion control algorithm based on assigning distance disagreements among robots generates the acceleration signals to be tracked by the vehicles. In particular, the proposed method does not need global positions nor tracking prescribed trajectories for the motion of the members of the team. The acceleration signals are followed accurately by an Incremental Nonlinear Dynamic Inversion controller designed for rotorcraft that measures and resists the tensions from the payload. Our approach allows us to analyze the involved accelerations and forces in the system so that we can calculate the worst case conditions explicitly to guarantee a nominal performance, provided that the payload starts at rest in the 2D centroid of the formation, and it is not under significant disturbances. For example, we can calculate the maximum safe deformation of the team with respect to its desired shape. We demonstrate our method with a team of four rotorcraft carrying a suspended object two times heavier than the maximum payload for an individual. 
Last but not least, our proposed algorithm is available for the community in the open-source autopilot Paparazzi.
\end{abstract}

\section{Introduction}
Robot swarms are envisioned to assist humans in logistics operations with better cost-effective approaches \cite{yang2018grand}. Rotorcraft are an essential part of this vision due to their mechanical simplicity and relative expendable nature. In this paper, we demonstrate a systematic method to deal with the transport of objects that are too heavy for a single rotorcraft, but not for a team of them. This added complexity is supported by the fact that rotorcraft do not scale up well aerodynamically speaking, and they become dangerous to operate if their size is big enough. In particular, we focus on carrying suspended loads. In this way, we give flexibility to the team to change its shape and distribute the load among the vehicles efficiently.

Formation control has been employed in the cooperative transportation of objects on the ground \cite{wang2004control}. In our approach, we use rigidity theory \cite{AnYuFiHe08} to describe the geometrical shape of the team so that we can design the desired load distribution among the rotorcraft in their steady-state, even while moving. The coordinated motion of the team is induced by injecting disagreements into the inter-robot distances of the formation. The superposition of specific sets of disagreements creates translational, rotational, and scaling motions of a distributed rigid formation \cite{MaJaCa15}. We combine the team motion controller with Incremental Nonlinear Dynamic Inversion (INDI) to track desired accelerations in the individual vehicles \cite{smeur2015adaptive}. By having prior knowledge about the created forces and moments of the actuators and their delays, the INDI tracks the commanded accelerations coming from the formation controller by using acceleration measurements, which includes the tension force from the load. This measurement allows the INDI controller to calculate the necessary increment in the previous control action for the motors. 

Our proposed method brings substantial positive differences with respect to the current approaches in the literature on collaborative transportation with aerial vehicles. For example, we do not need centralized calculations or global positioning, nor do we require previously calculated paths to be tracked by the vehicles \cite{jiang2013inverse,sreenath2013dynamics,michael2011cooperative}. Therefore, the usage of accurate motion capture systems is not a crucial requirement for our approach. We will see that while we employ them in our demonstration, we do not require global positioning but we fake relative measurements that could be provided by an onboard system \cite{coppola2018board}. Another practical positive aspect of our approach is that we promote scalability. The algorithm can be executed in a distributed way, i.e., robots can calculate their own desired accelerations based only on local information, and the calculations can be done in simple microcontrollers. Therefore we do not require heavy computers for the practical task like in \cite{tagliabue2017collaborative}. We do not require the individual vehicles to track any trajectories either. For example, such an approach, by a centralized motion planning, seems pivotal for the motion of the rotorcraft team in many works whose implementation is shown only in simulations \cite{masone2016cooperative,tagliabue2017collaborative,lee2018geometric}. Nevertheless, they consider active control over the payload. 

The main contribution of this paper focuses on the practical demonstration of the combined approaches of the INDI controller together with the motion of the formation by disagreements for the collaborative transportation of objects. In particular, we will show how these two techniques can predict results on the deformation of the shape together with the control of the velocity of the formation.

This paper is divided as follows. In Section \ref{sec: guidance} we introduce the guidance system for the formation and motion of the rotorcraft. The controller responsible for tracking the guidance signals is described in \ref{sec: indi}. We continue by presenting in Section \ref{sec: wcc} a qualitative description of the forces involved in the system and how to quantitatively calculate tolerances in the formation and control gains from a worst case condition. We show the performance of our system in Section \ref{sec: exp} with a team of four rotorcraft transporting a heavy object for an individual vehicle. We finally end the paper in Section \ref{sec: con} with some conclusions.

\section{Guidance system for the formation-motion control of second-order robots}
\label{sec: guidance}


\subsection{Undirected rigid formations}
Consider a team of $n\geq 2$ robots and denote by $p_i\in\mathbb{R}^2, i\in\{1,\dots,n\}$ their 2D positions in the horizontal plane parallel to the ground with respect to a fixed navigation frame of coordinates. The guidance system for the formation-motion of the team generates an acceleration signal to be tracked by the robots. In particular, the guidance will assume that the vehicles can generate such acceleration \emph{sufficiently fast} in the fixed navigation frame as it is a common assumption in the literature \cite{lee2018geometric}. Although our rotorcraft (quadcopter) can only generate a force along one axis in its body frame, we will see that the INDI controller adapts the attitude and the thrust of the vehicle fast enough so that the vehicle can track a \emph{sufficiently slow varying} acceleration signal effectively, while it rejects disturbances such as the non-constant tension from the ropes. In particular, as we will see, the exponential nature of the proposed guidance and the INDI control will allow us to set the time constants for the exponential decay of their signals by appropriately selecting their gains. The guidance system considers the following dynamical model for the formation-motion task of the vehicles
\begin{equation}
	\begin{cases}
\dot p = v \\
\dot v = u
	\end{cases},
\label{eq: dyn}
\end{equation}
where $p,v\in\mathbb{R}^{2n}$ are the stacked vector of positions and velocities in the plane parallel to the ground respectively, and $u\in\mathbb{R}^{2n}$ is the stacked vector of accelerations generated by the guidance system.

A robot does not need to measure its relative position with respect to all the robots in the team, but only with respect to its \emph{neighbors}. The neighbors' relationships are described by an undirected graph $\mathbb{G} = (\mathcal{V}, \mathcal{E})$ with the vertex set $\mathcal{V} = \{1, \dots, n\}$ and the ordered edge set $\mathcal{E}\subseteq\mathcal{V}\times\mathcal{V}$. The set $\mathcal{N}_i$ of the neighbors of robot $i$ is defined by $\mathcal{N}_i\dfb\{j\in\mathcal{V}:(i,j)\in\mathcal{E}\}$. We define the elements of the incidence matrix $B\in\R^{|\mathcal{V}|\times|\mathcal{E}|}$ for $\mathbb{G}$ by
$b_{ik} = \{1,-1\}$ if $\{\mathcal{E}_k^{\text{tail}},\mathcal{E}_k^{\text{head}}\}$ or $0$ otherwise, where $\mathcal{E}_k^{\text{tail}}$ and $\mathcal{E}_k^{\text{head}}$ denote the tail and head nodes, respectively, of the edge $\mathcal{E}_k$, i.e., $\mathcal{E}_k = (\mathcal{E}_k^{\text{tail}},\mathcal{E}_k^{\text{head}})$. For undirected graphs, how one sets the direction of the edges is not relevant for the stability results or the practical implementation of the algorithm \cite{oh2015survey}. The proposed formation control algorithm is based on the distance-based approach, i.e., we are defining shapes by only controlling distances between neighboring robots. These shapes are based on the rigidity graph theory \cite{AnYuFiHe08}. One of the advantages of controlling distances instead of relative positions is the freedom of the shape to be rotated and translated without modifying the controller. This fact will give us enough freedom to design both the motion and formation controller at once.

The stacked vector of the sensed relative positions by the robots can be calculated as
\begin{equation}
	z = (B^T \otimes I_2)p,
\end{equation}
where $I_2$ is the $2\times 2$ identity matrix, and the operator $\otimes$ denotes the Kronecker product. Note that each vector $z_k = p_i - p_j$ stacked in $z$ corresponds to the relative position associated with the edge $\mathcal{E}_k = (i, j)$. The introduced concepts and notations are illustrated in Figure \ref{fig: rigid}. As an example, in this paper we will focus on a regular square with the neighbors defined by the incidence matrix
\begin{equation}
	B = \left[\begin{smallmatrix}
	1  &  0  &  0 & 1  &  1 & 0\\
	-1 &  1  &  0 & 0  &  0 & 1\\
	0  &  0  &  1 & 0  & -1 & -1\\
	0  & -1  & -1 & -1 &  0 & 0
	\end{smallmatrix}\right].
\label{eq: B}
\end{equation}
Let $d := \begin{bmatrix}d_1,\dots,d_k\end{bmatrix}^T, k\in\{1,\dots,|\mathcal{E}|\}$ be the stacked column vector of fixed distances, associated to their corresponding edges, which defines the desired regular square. Then, the error signals to be minimized are given by
\begin{equation}
	e_k(t) := ||z_k(t)|| - d_k.
	\label{eq: e}
\end{equation}
The control action for each robot in order to stabilize the regular square can be derived from the gradient descent of the potential function involving all the error distances to be minimized and the kinetic energy of the agents
\begin{equation}
	V =\frac{c_1}{2}\sum_{i=1}^{|\mathcal{V}|}||v_i||^2 + \frac{c_2}{2} \sum_{k=1}^{|\mathcal{E}|} (||z_k(t)|| - d_k)^2, \label{eq: Vkquad}
\end{equation}
where $c_1,c_2 >0$, which leads to the following control action \cite{de2017taming} for each robot $i$
\begin{equation}
	^iu_i = -c_1{^iv}_i -c_2\sum_{j\in\mathcal{N}_i}\frac{^i(p_i-p_j)}{||p_i-p_j||}(||p_i-p_j|| - d_{(i,j)}), \label{eq: ui}
\end{equation}
where each desired distance $d_{(i,j)} = d_{(j,i)}$ is associated with its corresponding $d_k$, and the superscript $i$ over the vectorial quantities is used for the representation of a vector with respect to the local frame of coordinates of robot $i$.
\subsection{Motion control}
The control action (\ref{eq: ui}) only achieves the task of converging (exponentially fast) to the desired shape where all the agents will be stopped \cite{de2017taming}. In order to create motion we ask the agents to disagree on the distances $d_{(i,j)}$, i.e., the second term in (\ref{eq: ui}) is written as
\begin{align} \scriptstyle
	c_2& \scriptstyle \sum_{j\in\mathcal{N}_i}\frac{^i(p_i-p_j)}{||p_i-p_j||}\left (||p_i-p_j|| - (d_{(i,j)} + \mu_{(i,j)})\right) = \nonumber \\
	& \scriptstyle{=c_2\sum_{j\in\mathcal{N}_i}\frac{^i(p_i-p_j)}{||p_i-p_j||}(||p_i-p_j|| - d_{(i,j)}) - c_2\sum_{j\in\mathcal{N}_i}\frac{^i(p_i-p_j)}{||p_i-p_j||} \mu_{(i,j)},
	\label{eq: dis}}
\end{align}
where $\mu_{ij}\in\mathbb{R}$ is a designed disagreement and it is equal to zero in (\ref{eq: ui}). Note that the second term in (\ref{eq: dis}) will be the responsible of the desired motion of the agent $i$ during its steady state \cite{de2017taming}, e.g., once the regular square is achieved. In particular, these acceleration signals are linear combinations of the unit relative positions $\frac{^iz_k}{||z_k||}$ where the coefficients are the disagreements $\mu_{(i,j)}$.

We design three rigid motions for the team: two orthogonal translations and one rotational following the systematic method explained in \cite{de2017taming}. For each of these motions, each rotorcraft must follow a specific velocity with respect to the body frame $O_b$ as illustrated in Figure \ref{fig: rigid}. We first write the desired velocities for each motion as linear combinations of $\frac{^iz_k}{||z_k||}$ so we can construct the following matrix with disagreements based on the incidence matrix (\ref{eq: B})
\begin{equation}
	A_v = \left[\begin{smallmatrix}
	\mu_{(1,2)}  &  0  &  0 & \mu_{(1,4)}  &  \mu_{(1,3)} & 0\\
	\mu_{(2,1)} &  \mu_{(2,4)}  &  0 & 0  &  0 & \mu_{(2,3)}\\
	0  &  0  &  \mu_{(3,4)} & 0  & \mu_{(3,1)} & \mu_{(3,2)}\\
	0  & \mu_{(4,2)}  & \mu_{(4,3)} & \mu_{(4,1)} &  0 & 0
	\end{smallmatrix}\right].
\label{eq: Av}
\end{equation}

\begin{figure}
\centering
	\begin{tikzpicture}[scale=0.75]
		\node [quadcopter top,minimum width=1cm] at (-1.5,-1.5) {};
		\node at (-2.2,-2.2) {$3$};
		\node [quadcopter top,minimum width=1cm,rotate=20] at (-1.5,1.5) {};
		\node at (-2.2,2.2) {$4$};
		\node [quadcopter top,minimum width=1cm,rotate=-20] at (1.5,-1.5) {};
		\node at (2.2,-2.2) {$2$};
		\node [quadcopter top,minimum width=1cm,rotate=0] at (1.5,1.5) {};
		\node at (2.2,2.2) {$1$};
		\fill[gray] (-0.2,-0.2) rectangle (0.2,0.2) {};
		\draw [dashed] (1.5,1.5) -- (1.5,-1.5) node[pos=0.5,right] {$||z_1||$};
		\draw [dashed] (1.5,-1.5) -- (-1.5,1.5) node[pos=0.25,above,sloped] {$||z_2||$};
		\draw [dashed] (-1.5,1.5) -- (-1.5,-1.5) node[pos=0.5,left] {$||z_3||$};
		\draw [dashed] (-1.5,1.5) -- (1.5,1.5) node[pos=0.5,above,sloped] {$||z_4||$};
		\draw [dashed] (-1.5,-1.5) -- (1.5,1.5) node[pos=0.25,above,sloped] {$||z_5||$};
		\draw [dashed] (-1.5,-1.5) -- (1.5,-1.5) node[pos=0.5,above,sloped] {$||z_6||$};
		\draw [blue, thick, ->] (-1.5,-1.5) -- (-0.5,-1.5);
		\draw [blue, thick, ->] (-1.5,1.5) -- (-0.5,1.5);
		\draw [blue, thick, ->] (1.5,-1.5) -- (2.5,-1.5);
		\draw [blue, thick, ->] (1.5,1.5) -- (2.5,1.5);
		\draw [red, thick, ->] (-1.5,-1.5) -- (-1.5,-0.5);
		\draw [red, thick, ->] (-1.5,1.5) -- (-1.5,2.5);
		\draw [red, thick, ->] (1.5,-1.5) -- (1.5,-0.5);
		\draw [red, thick, ->] (1.5,1.5) -- (1.5,2.5);
		\draw [green, thick, ->] (-1.5,-1.5) -- (-2.5,-0.5);
		\draw [green, thick, ->] (-1.5,1.5) -- (-0.5,2.5);
		\draw [green, thick, ->] (1.5,-1.5) -- (0.5,-2.5);
		\draw [green, thick, ->] (1.5,1.5) -- (2.5,0.5);

		\draw [thick, ->] (0,0) -- (0,1) node[pos=0, below] {$O_b$};
		\draw [thick, ->] (0,0) -- (1,0);
\end{tikzpicture}
	\caption{\small We ask the rotorcraft to control their distances in between to form a regular square, where the payload is placed at the centroid but in a different altitude. The red, blue and green color vectors are velocities that create translational and rotational motions of the formation with respect to $O_b$. They are constructed as linear combinations of $z_k$ with the disagreements $\mu_{ij}$.}
	\label{fig: rigid}
\end{figure}
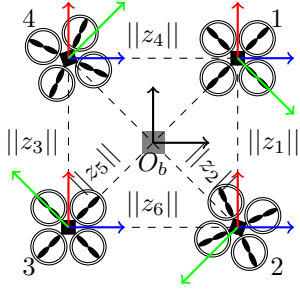

Let us write in compact form the stacked vector of all control actions (\ref{eq: ui}) with disagreements as in (\ref{eq: dis}) in the navigation frame 
\begin{equation} 
	u = -c_1 v - c_2\overline BD_zD_{\tilde z}e + \overline A\overline D_{\tilde z}z,
	\label{eq: udi}
\end{equation}
where $\tilde z\in\mathbb{R}^{|\mathcal{E}|}$ is the stacked vectors of all $\frac{1}{||z_k||}$, we define the operator $\overline X := X \otimes I_2$, and $D_x$ is the operator that takes each stacked element (vector or scalar) of $x$ and places them in a diagonal block matrix. The matrix $A$ will be function of the previously calculated $A_v$. In particular, we have dropped $c_2$ in the third element of (\ref{eq: udi}) since, as we will see, it can be cancelled out by scaling up or down the disagreements in (\ref{eq: Av}).

We need to introduce different calculations with respect to \cite{de2017taming} in order to figure out the matrix $A$ since in this work we deal with a different potential function (\ref{eq: Vkquad}) that leads to work with the unit vectors $\frac{z_k}{||z_k||}$ in the control action (\ref{eq: udi}). We first need to define the velocity error $e_v := v - \overline A_v\overline D_{\tilde z} z$.
\begin{proposition}
	\label{pro: 1}
	The matrix $A$ is calculated from $A_v$ and is given by
	$A = c_1 A_v + A_a$, where $A_a = A_{v_r}D_dB^TA_{v_r}$, and $A_{v_r}$ defines the desired steady-state rotational motion.
\end{proposition}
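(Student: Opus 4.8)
The plan is to choose $A$ so that, on the invariant set where the shape is already formed ($e=0$ in (\ref{eq: e})) and the team executes the designed rigid motion, the velocity error $e_v := v-\overline A_v\overline D_{\tilde z}z$ obeys the clean linear dynamics $\dot e_v=-c_1 e_v$; reading off the feedforward term that this forces will pin down $A$. First I would differentiate $e_v$ along (\ref{eq: dyn}) and substitute the control (\ref{eq: udi}), writing $v=e_v+\overline A_v\overline D_{\tilde z}z$. On $e=0$ the shape-correcting term $c_2\overline BD_zD_{\tilde z}e$ vanishes, and collecting what remains gives $\dot e_v=-c_1 e_v+(\overline A-c_1\overline A_v)\overline D_{\tilde z}z-\frac{d}{dt}\big(\overline A_v\overline D_{\tilde z}z\big)$. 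Since $\overline A-c_1\overline A_v=\overline A_a$ with $A=c_1A_v+A_a$, demanding $\dot e_v=-c_1 e_v$ reduces the whole proposition to the single feedforward identity
\[ \overline A_a\,\overline D_{\tilde z}z=\frac{d}{dt}\big(\overline A_v\,\overline D_{\tilde z}z\big), \]
so the task becomes evaluating the right-hand side and matching it to a constant real matrix acting on $\overline D_{\tilde z}z$.

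Next I would compute that time derivative. As $A_v$ is constant it passes through, leaving the derivative of the stacked unit vectors $\overline D_{\tilde z}z$. Differentiating each $z_k/\|z_k\|$ produces the normalization (projection) factor $\tfrac{1}{\|z_k\|}\big(I_2-z_kz_k^T/\|z_k\|^2\big)\dot z_k$, and I would substitute $\dot z=\overline B^T v=\overline B^T\overline A_v\overline D_{\tilde z}z$ obtained from $z=\overline B^T p$. The key structural observation is that the two translational components of the designed motion are uniform velocity fields and hence lie in the kernel of $\overline B^T$, so only the rotational field survives: $\dot z=\overline B^T\overline A_{v_r}\overline D_{\tilde z}z$. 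This is precisely where $A_{v_r}$ and $B^T$ enter the formula.

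Then I would exploit that a rigid rotation keeps each $\dot z_k$ orthogonal to $z_k$, so the projection acts as the identity on $\dot z_k$; together with the steady-state relation $\|z_k\|=d_k$, the normalization factors collapse into the fixed distance diagonal $D_d$ (up to the appropriate power), and the derivative closes up as a constant linear map applied to $\overline D_{\tilde z}z$. Reading off that map yields $A_a$ as a product of the rotational-motion matrix $A_{v_r}$, the distance diagonal, and the incidence matrix $B^T$, matching the claimed $A_a=A_{v_r}D_dB^TA_{v_r}$; combined with the feedforward identity this gives $A=c_1A_v+A_a$. I would finally record that the identity is required only on this invariant set, and that its validity there, together with the shape-convergence result of \cite{de2017taming}, delivers the advertised exponential behaviour.

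I expect the main obstacle to be conceptual rather than computational. The feedforward $\frac{d}{dt}\big(\overline A_v\overline D_{\tilde z}z\big)$ effectively rotates every unit vector by $90^\circ$, yet no matrix of the Kronecker form $A_a\otimes I_2$ can perform such a pointwise rotation; the crux is therefore to show that, on the rigid shape, these rotated unit vectors are re-expressible as genuine real linear combinations of the original $z_k/\|z_k\|$. That is exactly what $\overline B^T\overline A_{v_r}\overline D_{\tilde z}z$ accomplishes, because $\overline A_{v_r}$ reproduces the rigid rotational velocity field and $\overline B^T$ maps it back to edge space. The remaining delicate bookkeeping is the careful handling of the normalization-derivative term, the justification of the substitutions $\|z_k\|=d_k$ and $\dot z_k\perp z_k$ (valid precisely on the invariant set and not off it), and tracking which motion components land in each factor so that the exact placement of $A_v$ versus $A_{v_r}$ and the power of $D_d$ come out as stated.
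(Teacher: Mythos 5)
Your overall strategy is the same as the paper's: the paper also obtains $A_a$ by differentiating the designed steady-state velocity $v^* = \overline A_v\overline D_d z^*$ along the flow, substituting $\frac{\mathrm{d}}{\mathrm{d}t}z^* = \overline B^T v^*$, and matching the result against the feedforward $u^* = \overline A_a \overline D_d z^*$ once $e, e_v = 0$. Your framing via demanding $\dot e_v = -c_1 e_v$ is a clean equivalent of the paper's decomposition $u = -c_1 e_v - c_2\overline B D_zD_{\tilde z}e + \overline A_a\overline D_{\tilde z}z$, and your careful handling of the normalization derivative (the projection $\frac{1}{\|z_k\|}(I_2 - z_kz_k^T/\|z_k\|^2)$ collapsing to the identity because rigid motion gives $\dot z_k \perp z_k$) is actually tidier than the paper, which simply carries the constant $D_d$ throughout. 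Your kernel argument $B^T\mathbf{1} = 0$ for killing the translational field inside $\overline B^T$ is also exactly the mechanism the paper relies on.

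However, there is a genuine gap at your final "reading off" step. Your own derivation produces the identity $\overline A_a\overline D_{\tilde z}z = \overline A_v\,\overline D_{\tilde d}\,\overline B^T\overline A_{v_r}\overline D_{\tilde d}\,z$ (writing $\tilde d_k = 1/d_k$): the kernel-of-$\overline B^T$ argument removes the translational part only in the \emph{inner} slot, while the \emph{outer} factor remains the full $A_v = A_{v_t} + A_{v_r}$, since $\overline A_v$ "passes through" the time derivative intact, as you yourself note. You then assert this matches $A_a = A_{v_r}D_dB^TA_{v_r}$, silently replacing the outer $A_v$ by $A_{v_r}$. These differ by $A_{v_t}D_{\tilde d}B^TA_{v_r}$, which on the invariant set is not zero when translation and rotation are commanded simultaneously: since $\overline A_{v_t}\otimes$-commutes with blockwise rotations, this term equals (up to the angular rate) the $90^\circ$-rotated translational field, i.e., the physical $\omega\times v_t$ acceleration arising because the commanded translation is expressed in the co-rotating body frame $O_b$. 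The paper avoids this by arguing \emph{per pure motion}: for pure translation the acceleration is identically zero because $\frac{\mathrm{d}}{\mathrm{d}t}z^*(t) = 0$, so translation contributes nothing to $A_a$, and for pure rotation $A_v = A_{v_r}$, which delivers $A_a = A_{v_r}D_dB^TA_{v_r}$. Your proof needs this explicit case distinction (or an explicit restriction to non-superposed motions); as written, the last substitution is unjustified and, for a superposed translation--rotation command, false.
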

\begin{proof}
Let us define the stacked vector $v^*(t)\in\mathbb{R}^{2|\mathcal{V}|}$ of desired velocities for the agents. In particular, we have designed it from linear combinations of the unit vectors of the desired relative positions by employing (\ref{eq: Av}). Therefore we can write
\begin{equation}
	v^*(t) = \overline A_v\overline D_d z^*(t),
\end{equation}
	where we recall that $d$ is the stacked vector of desired distances and $z^*\in\mathbb{R}^{2|\mathcal{E}|}$ is the stacked vector of desired relative positions with respect to $O_b$ in Figure \ref{fig: rigid}. Now we calculate the stacked vector with all the desired accelerations
\begin{align}
	\frac{\mathrm{d}}{\mathrm{d}t}v^*(t) &= \overline A_v \overline D_d \frac{\mathrm{d}}{\mathrm{d}t} z^*(t) \nonumber \\
	&= \overline A_v \overline D_d   \overline B^T v^*(t) \nonumber \\
	&= \overline A_v \overline D_d  \overline B^T \overline A_v\overline D_d z^*(t). \label{eq: p1}
\end{align}
Let $A = c_1A_v + A_a$ so that (\ref{eq: udi}) can be written as
	\begin{equation}
	u = -c_1e_v -c_2\overline BD_zD_{\tilde z}e + \overline A_a\overline D_{\tilde z}z, 
	\label{eq: p2}
\end{equation}
	therefore once $e,e_v=0$, i.e., the formation is at the desired shape and velocity, we have that $u^*(t) = \overline A_a D_dz^*(t)$, and noting that for pure translational motion the accelerations are identically zero because $\frac{\mathrm{d}}{\mathrm{dt}}z^*(t) = 0$ in such a case, then from (\ref{eq: p1}) and (\ref{eq: p2}) we have that
\begin{align}
	A_a D_d z^*(t) &= A_{v_r}D_dB^TA_{v_r} D_d z^*(t) \nonumber \\
	A_a &= A_{v_r}D_dB^TA_{v_r}.
\end{align}
\end{proof}
The key for showing the exponential stability of $e$ and $e_v$ lies on the fact that under the control (\ref{eq: udi}) for $A = 0$ these signals are exponentiall stable \cite{de2017taming}. Then, the disagreements, i.e. $A\neq0$, are treated as parametric disturbances as suggested in \cite{garcia2015controlling}. Therefore, if they are small enough (or $c_1$ and $c_2$ big enough), then we do not modify the exponential nature of the convergence of $e$ and $e_v$.

\section{Rotorcraft control}
\label{sec: indi}

\subsection{Incremental Nonlinear Dynamic Inversion controller}
We have taken the concept of Incremental Nonlinear Dynamic Inversion (INDI) \cite{smeur2018cascaded}, and applied it to track the linear accelerations generated by the guidance system for the formation-motion of the rotorcraft. The idea behind INDI is that forces and moments acting on the vehicle are, according to classical mechanics, proportional to the acceleration and angular acceleration of the vehicle. The acceleration can be measured with the accelerometer, and the angular acceleration can be derived from the gyroscope. From a desired increment in linear and angular acceleration, the required increment in inputs can be easily calculated using a control effectiveness matrix. Since the 3D tensions from the rope will be measured by the accelerometer and gyroscope as well, they will be naturally counteracted by the controller, without having to model the load. Though the tension of the ropes may introduce small moments on the rotorcraft, the ropes are attached close to the center of mass and the expected effect on the attitude dynamics is negligible.

To obtain a measurement of the linear acceleration $a_0$ of a rotorcraft, we take the specific force measurement of the accelerometer, and add the gravity vector to that. The acceleration a small time step ahead can now be predicted with
\begin{equation}
	a - a_0 = \frac{1}{m}G(\eta_0,T_0)(u-u_0),
	\label{eq: ainc}
\end{equation}
where $a\in\mathbb{R}^3$ is the acceleration in the navigation frame (the subscript $0$ denotes for current value), $m$ is the mass of the Bebop, $\eta = \begin{bmatrix}\phi & \theta & \psi \end{bmatrix}$ are the three attitude angles \emph{roll}, \emph{pitch} and $\emph{yaw}$, we group the pitch, roll and thrust in $u = \begin{bmatrix}\phi & \theta & T\end{bmatrix}^T$, and we finally define the matrix of partial derivatives of the thrust vector 
\begin{equation}
	G(\eta,T) =	\left[\begin{smallmatrix}
		T(c\phi s\psi - s\phi c\psi s\theta) & Tc\phi c\psi c\theta & s\phi s\psi + c\phi c\psi s\theta \\
		-T(s\phi s\psi s\theta + c\phi c\psi) & Tc\phi s\psi c\theta & c\phi s\psi s\theta - s\phi c\psi \\
		-Tc\theta s\phi & -Ts\theta c\phi & c\phi c\theta
	\end{smallmatrix}\right].
\end{equation}
The measured acceleration $a_0$ incorporates disturbances, and the force from the rope tension. Because the acceleration measurement is noisy due to vibrations, we employ a low pass Butterworth filter, denoted with a subscript $f$. In order to synchronize the signals, $a_0$ and $u_0$ will both be filtered such that they will have the same delay. The filtered signals are incorporated in (\ref{eq: ainc}) and the equation is inverted to obtain the following incremented controller
\begin{equation}
	\Delta u = mG^{-1}(\eta_f,T_f)(\nu_a - a_f),
	\label{eq: inca}
\end{equation}
where $\nu_a$ is the desired 3D linear acceleration for the rotorcraft in the navigation frame. In particular, the first two components of $\nu_a$ for each rotorcraft are given by (\ref{eq: udi}) and the third or vertical component is generated by the PD controller
\begin{equation}
	\nu_{a_z} = k_p(p_z - p_{z_d}) - k_vv_z,
	\label{eq: pdalt}
\end{equation}
where the gains $k_p$ and $k_v$ are chosen according to the stability analysis given in \cite{smeur2018cascaded}.
\begin{remark}
	To calculate $G$ in (\ref{eq: inca}) we need to know $T_f$, which is experimentally estimated by a quadratic function $f_T(\omega_f^2)$ in a static airflow regime. Errors due to the simplifications in the modelling are expected to have a low impact on the performance. This is explained because of the incremental nature of the controller, i.e., if an increment of thrust does not give the desired acceleration, then another increment is applied in a similar way as an integral controller does. However, note that the INDI measures the disturbances and the tensions from the rope, and have a \emph{feedforward} knowledge of the vehicle and its actuators.
\end{remark}

\subsection{Effect of the payload on the control}
As we have discussed, no explicit knowledge about the payload is incorporated in the control of the rotorcraft since the
 INDI controller is naturally designed for dealing with non-modelled external forces by measuring them directly. For example, due to the incremental nature of the controller, feedforward control increments will be applied on the actuators as long as the desired acceleration is not reached, which means that steady state offsets will not occur. Consequently, external forces are incorporated in the control either by compensating them, or by taking advantage from them if for example the tension of the load and the desired acceleration have the same direction.

\begin{figure}
\centering
\begin{tikzpicture}[auto, node distance=2cm,>=latex']
	\node [block] (bebop) {Motors \\ Rigid body};
	\node [block, below of=bebop, text width=8em, minimum height=1em, xshift=-0em, yshift=2em] (kf) {KF attitude};
	\node [block, below of=kf, text width=8em, minimum height=1em, yshift=3.3em] (alt) {PD alt. Eq. (\ref{eq: pdalt})};
	\node [block, below of=alt, text width=8em, minimum height=1em, yshift=4em] (gnc) {Guidance Eq. (\ref{eq: udi})};
	\node [block, left of=bebop, xshift=-3em] (indiw) {Attitude Control (INDI)};
	\node [block, left of=alt, minimum height=3em, xshift=-5em, yshift=0.4em] (india) {INDI $a$ Eq. (\ref{eq: inca})};

	\draw[->] ($(bebop.north east)!0.10!(bebop.south east)$) -- +(1.2,0) node[pos=0.75,above]{$p,v$} |- ($(gnc.north east)!0.25!(gnc.south east)$);
	\draw[->] ($(bebop.north east)!0.10!(bebop.south east)$) -- +(1.2,0) |- node[pos=0.25]{}(alt.east);
	\draw[<-] ($(gnc.north east)!0.75!(gnc.south east)$) -- +(1.2,0) node[pos=0.5,below]{$p_{\text{neighbors}}$};
	\draw[->] ($(bebop.north east)!0.90!(bebop.south east)$) -- +(0.75,0) node[pos=0.75, above] {$\Omega$} |- ($(kf.north east)!0.20!(kf.south east)$);
	\draw[->] ($(bebop.north east)!0.90!(bebop.south east)$) -- +(0.75,0) |- +(-5.5,-0.5) |- ($(indiw.south west)!0.20!(indiw.north west)$) node[pos=0.45, left] {$\Omega$};
	\draw[->] (bebop.east) -| (2,0) |- ($(kf.south east)!0.2!(kf.north east)$) node[pos=0.45] {};
	\draw[->] (bebop.east) -| (2,0) node[pos=0.3] {$a$} |- ($(india.south east)!0.75!(india.north east)$);
	\draw[->] (gnc.west) -| +(-0.5,0) |- ($(india.south east)!0.25!(india.north east)$);
	\draw[->] (alt.west) -| +(-0.5,0) |- ($(india.south east)!0.25!(india.north east)$) node[pos=0.7]{$\nu_a$};
	\draw[->] (kf.west) -| +(-0.5,0) |- ($(india.south east)!0.9!(india.north east)$) node[pos=0.5,above]{};
	\draw[->] (kf.west) -- +(-3.5,0) |- (indiw) node[pos=0.75]{$\eta$};
	\draw[->] (indiw.east) -- (bebop.west) node[pos=0.5]{$\omega_c$};
	\draw[->] (india.west) |- +(-0.7,0) |- node[pos=0.7,above] {$\Delta T_d, \eta_d$}($(indiw.south west)!0.85!(indiw.north west)$);
	\draw [color=gray,thick](-1.3, -0.8) rectangle (1.3,1);
	\node at (-1.3,0.75) [above=5mm, right=0mm] {\textsc{Bebop}};
	\end{tikzpicture}
	\caption{\small Block diagram of the elaborated signals in the closed loop system. 
	We actuate over the Bebops commanding desired rpm $\omega_c$ for the motors. We measure the angular velocities ($\Omega$) and linear accelerations ($a$) for the INDI controllers and the attitude estimation. We measure the positions $p$ to fake direct measurements of $z$ with a vision localization system.} 
	\label{fig: block}
\end{figure}
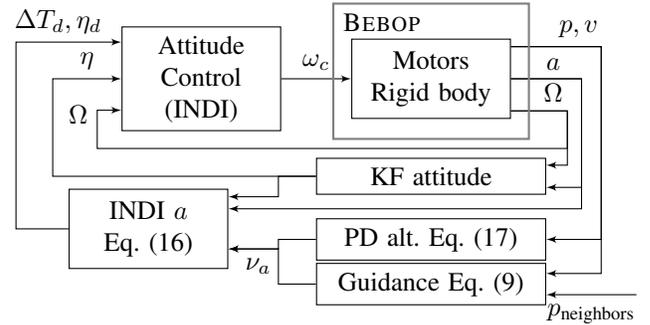

\section{Force and worst case analysis}
\label{sec: wcc}
In this section, as an illustrative example, we conduct an analysis to guarantee the safety of the system while it is close to a steady state, e.g., close to the desired shape with the desired velocity. We analyze the forces involved, in particular the ones in a worst case, and connect them to the physical limits of the vehicles together with the demands from the guidance system in Section \ref{sec: guidance} and the controllers in Section \ref{sec: indi}. For the analysis, our vehicles are four rotorcraft Bebop from the company Parrot that have been characterized in our laboratory \cite{smeur2018cascaded}. Each rotorcraft weighs around $400$ grams, an individual motor/propeller can generate $1.6$ Newtons of thrust once it is rotating at $160$ Hz in hover flight, and a motor/propeller generates a moment around $0.0006$ m/s$^2$/rpm.

We describe the forces involved in the system in a steady state configuration as starting point. We set the desired shape as the regular square in Figure \ref{fig: rigid}, where all the ropes have the same length so that the payload can be placed at the center of the square while hanging. This configuration allows an equal distribution of the payload's weight across all the vehicles. The force diagram on the left of Figure \ref{fig: dia} focuses on the vertical plane connecting vehicles $4$ and $2$. It shows how the vehicles need to tilt to compensate the horizontal components of the ropes' tensions. In particular, we define \emph{tilt} as the angle formed by the thrust force and the horizontal plane parallel to the ground. If the rotorcraft are in equilibrium, $M$ is the mass of the payload, and $l$ is the fixed length of the rope, then the tension satisfies
\begin{equation}
	||T_{M_2}|| = \frac{Mg}{4} \frac{l}{\sqrt{l^2-\frac{d_2^2}{4}}},
	\label{eq: T}
\end{equation}
which increases clearly with $d_2$. Therefore, we will set $d_2$ the shortest as possible depending on the sizes of the rotorcraft and the expected disturbances that will vary $||z_2||$ (equal to $d_2$ in the steady state).

The diagram on the right side of Figure \ref{fig: dia} is in the plane described by the vehicles $1$ and $3$. We describe the situation where we demand the same horizontal acceleration to the vehicles while keeping the altitude constant. In such a case, we see that the force $R$ in vehicle $1$ will create more tension on its rope since the vehicle is more tilted, and consequently it needs a higher $F_1$ to compensate for gravity. As a result, the vehicle $1$ will need to lift more weight from the payload than its neighbor. Because the altitude is constant, and both vehicles equally accelerate, then the payload experiences the same force $R$ as well.

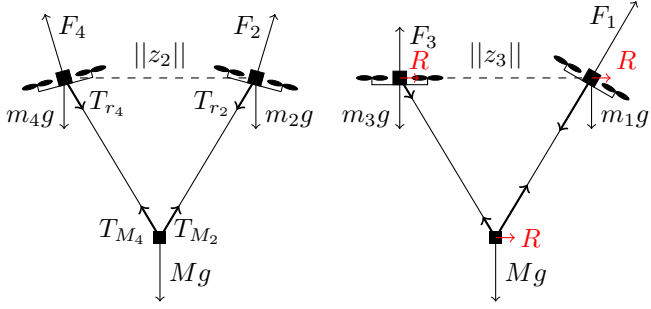
\begin{figure}
	\centering
	\begin{tikzpicture}
		\begin{scope}[scale = 0.85]
		\node [quadcopter side,minimum width=1cm,rotate=15] at (-1.5,0) {};
		\node [quadcopter side,minimum width=1cm,rotate=-15] at (1.5,0) {};
		\fill (-0.1,-0.1-2.5) rectangle (0.1,0.1-2.5);
		\draw (0,-2.5) -- (1.5,0);
		\draw (0,-2.5) -- (-1.5,0);
		\draw [->] (0,-2.5) -- (0,-3.5) node[pos=0.9,above right] {$Mg$};
		\draw [thick, ->] (0,-2.5) -- (0.3,-2) node[pos=0.2, right] {$T_{M_2}$};
		\draw [thick, ->] (0,-2.5) -- (-0.3,-2) node[pos=0.2, left] {$T_{M_4}$};
		\draw [thick, ->] (-1.5,0) -- (-1.2,-0.5) node[pos=0.8, right] {$T_{r_4}$};
		\draw [thick, ->] (1.5,0) -- (1.2,-0.5) node[pos=0.8, left] {$T_{r_2}$};
		\draw [->] (1.5,0) -- (1.5,-0.8) node[pos=0.8, right] {$m_2g$};
		\draw [->] (-1.5,0) -- (-1.5,-0.8) node[pos=0.8, left] {$m_4g$};
		\draw [->] (-1.5,0) -- (-1.8,1) node[pos=0.8, right] {$F_4$};
		\draw [->] (1.5,0) -- (1.8,1) node[pos=0.8, left] {$F_2$};
		\draw [dashed] (-1.5,0) -- (1.5,0) node[pos=0.5, above] {$||z_2||$};
		\end{scope}

		\begin{scope}[scale = 0.85, shift={(5.25,0)}]
		\node [quadcopter side,minimum width=1cm,rotate=0] at (-1.5,0) {};
		\node [quadcopter side,minimum width=1cm,rotate=-30] at (1.5,0) {};
		\fill (-0.1,-0.1-2.5) rectangle (0.1,0.1-2.5);
		\draw (0,-2.5) -- (1.5,0);
		\draw (0,-2.5) -- (-1.5,0);
		\draw [->] (0,-2.5) -- (0,-3.5) node[pos=0.9,above right] {$Mg$};
		\draw [thick, ->] (0,-2.5) -- (0.5,-1.67) node[pos=0.2, right] {};
		\draw [thick, ->] (0,-2.5) -- (-0.2,-2.17) node[pos=0.2, left] {};
		\draw [thick, ->] (-1.5,0) -- (-1.3,-0.33) node[pos=0.8, right] {};
		\draw [thick, ->] (1.5,0) -- (1,-0.83) node[pos=0.8, left] {};
		\draw [->] (1.5,0) -- (1.5,-0.8) node[pos=0.8, right] {$m_1g$};
		\draw [->] (-1.5,0) -- (-1.5,-0.8) node[pos=0.8, left] {$m_3g$};
		\draw [->] (-1.5,0) -- (-1.5,0.8) node[pos=0.8, right] {$F_3$};
		\draw [->] (1.5,0) -- (2.2,1.2) node[pos=0.8, left] {$F_1$};
		\draw [dashed] (-1.5,0) -- (1.5,0) node[pos=0.5, above] {$||z_3||$};
		\draw [red,->] (0,-2.5) -- (0.3,-2.5) node[pos=0.8, right] {$R$};
		\draw [red,->] (-1.5,0) -- (-1.2,0) node[pos=1, above] {$R$};
		\draw [red,->] (1.5,0) -- (1.8,0) node[pos=0.8, above right] {$R$};
		\end{scope}
\end{tikzpicture}
	\caption{\small Diagram forces. On the left, the plane defined by the payload and the vehicles $2$ and $4$ in a configuration of equilibrium. On the right, the plane defined by the payload and the vehicles $1$ and $3$ where the guidance system demands the same acceleration from both vehicles.}
	\label{fig: dia}
\end{figure}

We take the above described cases to calculate conservative bounds on the maximum deformation of the formation, i.e., on the norm of the error signal $e(t)$, on the gains $c_1$ and $c_2$ and the disagreements $\mu_r$ in (\ref{eq: udi}) for a demanded motion once a rotorcraft is close to the equilibrium. We have measured that one actuator of the Bebops can generate around $1.6$ Newtons of force at the $85\%$ of its capacity, so we consider $6.4$ Newtons as a reference thrust force for the following worst case condition. We first estimate the maximum safe tilt for a rotorcraft. During the experiments, the initial positions of the rotorcraft are close to the desired one. Therefore, the payload will always stay close to the centroid as we will see according to the calculations in this section. We consider that a vehicle will share a third of the weight of the payload as a conservative worst case. In particular, we will deal with a payload equal to the mass of a single vehicle. Therefore the maximum vertical force to compensate will be $(0.4 + \frac{0.4}{3})g = 5.22$ Newtons. A simple trigonometric calculation reveals that the maximum tilt for a rotorcraft is $\operatorname{arccos}\frac{5.22}{6.4} = 0.62$ radians. Therefore, we impose a maximum angle of $20$ degrees ($0.35$ rads) for both pitch and roll so that the tilt is below 0.62 rads. Indeed, this relatively small angle helps the INDI controller since for its design we have assumed small increments in the commanded attitude signals. This estimation will leave around $6.4\sin(0.35) = 2.18$ Newtons of maximum available force for a rotorcraft in the horizontal plane.

We then estimate what would be a conservative bound for a horizontal force in opposition to the one created by the thrust of the vehicle. Although present, we will not consider any aerodynamic drag since we will set the maximum vehicle's speed in $1$ m/s, and the tension from the rope is substantially bigger than any drag at that speed. Focusing on vehicle $2$, from equation (\ref{eq: T}) and basic trigonometry we can derive the expression for the bound on the horizontal tension of the rope
\begin{equation} 
	 T_h = \frac{Mg}{3} \frac{||z_2||}{2\sqrt{l^2-\frac{||z_2||^2}{4}}},
	\label{eq: Th}
\end{equation}
where we assumed that the horizontal distance between the vehicle and the payload is approximately $\frac{||z_2||}{2}$ since the initial positions for the rotorcraft are close to the desired square, and the controllers will keep such a situation if they can cope with the predicted worst cases in this current analysis. 
We set the length of the rope to approximately $l=\sqrt{1.25}$ meters, and for equation (\ref{eq: Th}) we consider a very conservative $||z_2|| = 2$ meters since in the experiments we will set $d_2$ close to $1$ meter. As a result, an upper bound to $T_h$ is $\frac{Mg}{3}$, and because the maximum available horizontal force in the worse case for the tilt is $2.18$ Newtons, we can conclude that the maximum norm of the acceleration to be asked by the guidance system is
\begin{equation}
	\operatorname{max}(||\ddot p_i||) = \frac{2.18 - \frac{Mg}{3}}{m_i} = 2.18 \,m/s^2,
\end{equation}
where $m_i = M = 0.4$ Kg are the mass of the vehicle $i$ and the payload. Consequently, we impose on the vehicles that the maximum for each of the acceleration coordinates in the plane horizontal to ground is $2.18\times\sin(0.79) = 1.54 \, m/s^2$, i.e., a vector of magnitude 2.18 with 45 degrees with respect to the X and Y axis. For example, starting from the equation $(\ref{eq: udi})$ for the vehicle $4$ (and the same argument can be extended to the rest of vehicles) we can calculate conservative values for $c_1,c_2$ and $\mu_r$ from the following expression
\begin{equation}
	\begin{cases}
		\ddot p_{4_x} = -c_1 e_{v_x} - c_2(e_4 + 0.7e_2) + \mu_r ||z_4|| \\
		\ddot p_{4_y} = -c_1 e_{v_y} - c_2(e_3 + 0.7e_2) - \mu_r ||z_3||.
	\end{cases}
	\label{eq: wc}
\end{equation}
The first term in the equations in (\ref{eq: wc}) is related to the desired speed in one of the horizontal directions. For example, we recall that for a desired translational motion, the desired velocity is designed by the disagreements $\mu_t$ in Section \ref{sec: guidance}. By design, we will not demand a higher speed than $1$ m/s, so we can safely assume $\operatorname{max}\{||e_v||\} = 1$. The second term refers to the control of inter-vehicle distances. Recall that we start close to the equilibrium with a desired square of side $1$ meter, therefore we can assume a very conservative worst case of $\operatorname{max}\{||e_k||\} = 1$ meter since we consider as worst case for the horizontal tension a distance of $2$ meters between vehicles on the side of the squared formation, e.g., $\operatorname{max}\{||z_4||\} = 2$ meters. Note that looking at Figure \ref{fig: rigid}, $e_2$ almost equally contributes to both components, while for example $e_3$ can be safely omitted in the $x$ component. Finally, the disagreement $\mu_r$ constructs the desired centripetal acceleration towards the centroid, so the formation spins around it. Therefore, in order to satisfy the following worst case condition
\begin{equation}
\scriptstyle 1.54\text{m/s}^2 \geq c_1 \operatorname{max}\{||e_v||\} + c_2(\operatorname{max}\{e_4\}+0.7\operatorname{max}\{e_2\}) + \mu_r \operatorname{max}\{||z_4||\},
	\label{eq: max}
\end{equation}
we assign $c_1 = 0.17$, $c_2 = 0.55$ and $\operatorname{max}\{\mu_r\} = 0.2$, where $\mu_r$ sets the limit to the maximum angular velocity of the spinning motion of the formation. With these chosen values, the system remains stable and within its physical limits. Firstly, the forces from the payload in this analysis have been found substantially smaller than the disturbances that can be handled by the Bebops \cite{smeur2018cascaded} with the INDI controller. Secondly, the incremental accelerations demanded by the guidance system are bounded and within the physical limits to be tracked effectively. Thirdly, the stability analysis of the guidance system \cite{de2017taming} guarantees the maximum magnitude of the error signals $e_v$ and $e$. In particular, if the accelerations are tracked correctly and the distances between the vehicles have an initial error of less than $0.5$ meters, then following \cite{de2017taming} it can be derived that the maximum for a single error distance cannot be more than a meter if the requested velocity has a speed smaller than a meter per second. All in all, the correct performance of the team carrying a load is guaranteed under the described nominal conditions in this section.

\begin{remark}
For chosen the gains $c_1$ and $c_2$, we checked in simulation that the desired acceleration signals have a time constant of around $1$ second for its exponential decay. The chosen gains for the INDI controller indicates a time constant of around $0.01$ seconds. Therefore, we can guarantee that the guidance system is at least 100 times \emph{slower} in order to ensure the stability of the two slow-fast interconnected systems.
\end{remark}

\begin{remark}
The payload is assumed at rest at the initial conditions, e.g., it is not swinging. Since the payload starts close to the centroid of the formation, and the guidance system guarantees that the centroid is under control, then the forces on the payload are very similar to the ones required to achieve the desired velocities of the formation as a whole. Therefore, as we will see in the experiments, we predict to do not have significant swing motions on the payload. If there are small disturbances on it, the INDI would detect them since changes on the tensions of the ropes are measured by the accelerometers. So, both the INDI and the guidance system will compensate such disturbances. 
\end{remark}

\section{Experimental results}
\label{sec: exp}
The experimental results\footnote{A high-definition video of this section can be found at \href{https://www.youtube.com/watch?v=HUZH46Oxc5c}{https://www.youtube.com/watch?v=HUZH46Oxc5c}.} are conducted in a controlled area, e.g., no wind or obstacles around. The Bebops are equipped with the Paparazzi autopilot\footnote{\href{http://wiki.paparazziuav.org/wiki/}{http://wiki.paparazziuav.org/wiki/}}
and their positions are obtained with an Optitrack camera system. An operator with a gamepad commands the movements in Figure \ref{fig: rigid} where the sticks set the magnitude of the disagreements of the matrix $A$ in Proposition \ref{pro: 1}, i.e., the speed of the motions. The operator with the gamepad also controls the scale of the formation and its altitude at once, e.g., to control when the payload is lifted. The guidance system runs in a ground control station. It processes the relative positions of the robots and the input from the gamepad at the fixed frequency of $4$ Hz. We show in Figure \ref{fig: cap} a caption of the experiments and the trajectories described by the team while it is steered by the operator. We would like to remind that the operator commands the formation as a single \emph{super-vehicle}, i.e., the operator maneuvers the team as a single solid rigid body. In this experiment, the payload flies at around an altitude of $1$ m over the ground. The Figure \ref{fig: sig} shows the actual distances between vehicles and their velocities in the navigation frame of coordinates. In particular, we show the distances of the two diagonals and two of the sides of the square as defined in Figure \ref{fig: rigid}. The operator changes the scale of the formation (black-dashed line) over time at the same time he maneuvers the formation. In fact, in order to show the robustness of the system, during some parts of the experiment the operator changes quickly the direction of the desired velocity, e.g., check seconds 253 and 278. One can check that our worst-case analysis holds since the worst error distance we noticed is less than $30$ cm around second 296 during a spinning motion of the formation. In the video of the experiments, one can also check the predicted non-swing motion of the payload during its transportation.
\begin{figure}
\centering
  \includegraphics[width=0.48\columnwidth]{./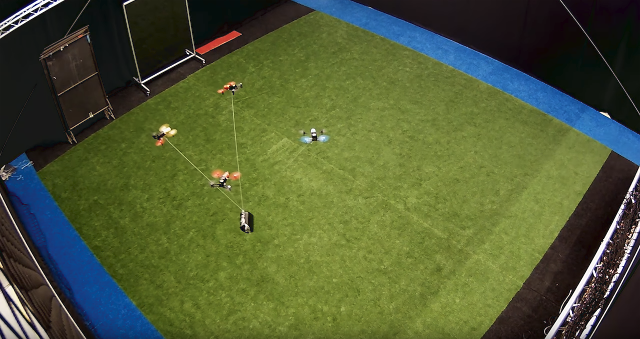}
  \includegraphics[width=0.48\columnwidth]{./images/p2}
	\caption{\small On the left, a caption of the rotorcraft formation transporting the payload. On the right, the trajectories (in meters) described by the team of rotorcraft during the collaborative transportation.} 
\label{fig: cap}
\end{figure}



\begin{figure}
\centering
	\includegraphics[width=1\columnwidth]{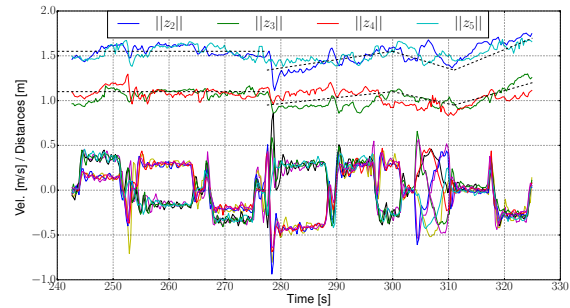}
	\caption{\small Distances (on top) and X/Y velocity components of the vehicles (on bottom) evolution during the collaborative transportation. Black-dashed lines represent the desired inter-vehicle distances. For the translational motion, all vehicles experience the same velocities for the $x$ and $y$ coordinates respectively. In the seconds 253 and 278 the vehicles are asked to change abruptly their velocities. The velocities describe a sinusoid for the spinning motion in the second 305 (circle on the left in Figure \ref{fig: cap}).}
	\label{fig: sig}
\end{figure}

\section{Conclusions}
\label{sec: con}
This paper has experimentally validated the predicted stability properties of a combined method (motion by disagreements + INDI) for the collaborative transportation of a payload by a team of rotorcraft. In particular, the whole setup allows us to perform an accurate analysis of forces and accelerations in the system. Consequently, this approach enable the possibility of performing a worst-case analysis such that a nominal operation can be guaranteed. Experiments are conducted with a team of four rotorcraft carrying a heavy payload, impossible to be lifted by a single vehicle. The experimental results match with the predicted nominal behaviors. 

\bibliographystyle{IEEEtran}
\bibliography{hector_ref.bib}

\end{document}